\newcommand{\DF}[2]{D_F[#1||#2]}
\newcommand{\E}[1]{\mathbb{E}\left[#1\right]}
\newcommand{\Ex}[2]{\mathbb{E}_{#1}\left[#2\right]}
\newcommand{\inner}[2]{\left\langle #1, #2 \right\rangle}
\newtheorem{theorem}{Theorem}[section]
\newtheorem{lemma}[theorem]{Lemma}
\newtheorem{definition}[theorem]{Definition}
\date{\vspace{-5ex}}
\begin{document}

\title{A Generalized Bias-Variance Decomposition for Bregman Divergences}
\author{David Pfau\\\texttt{david.pfau@gmail.com}}
\maketitle

\begin{abstract}
    The bias-variance decomposition is a central result in statistics and machine learning, but is typically presented only for the squared error. We present a generalization of the bias-variance decomposition where the prediction error is a Bregman divergence, which is relevant to maximum likelihood estimation with exponential families. While the result is already known, there was not previously a clear, standalone derivation, so we provide one for pedagogical purposes. A version of this note previously appeared on the author's personal website without context \citep{pfau2013generalized}. Here we provide additional discussion and references to the relevant prior literature.
\end{abstract}

\section{Introduction}
The bias-variance decomposition/dilemma/tradeoff \citep{geman1992neural} states that for any learning algorithm, its generalization error can be broken down into three terms -- a {\em bias} that captures the tendency of a model to favor certain solutions in the absence of enough data, a {\em variance} that captures the tendency of a model to fit fluctuations in a dataset which are not predictive, and a {\em noise} which captures the intrinsic unpredictability even in the case of a perfect model. The decomposition was originally formulated just for the case where the generalization error is quantified in terms of the mean-squared error, although extensions to the 0-1 loss and $\ell_1$ (absolute value) loss also exist \citep{kohavi1996bias, james1997generalizations, domingos2000unified}.

For applications in density estimation and generalized linear models, where the prediction error generally takes the form of a log likelihood, it is necessary to go beyond the squared error. If the noise model takes the form of an exponential family distribution, then the log likelihood can be written in the form of a {\em Bregman divergence} \citep{bregman1967relaxation} (Lemma~\ref{lemma:exp-fam}). This includes the cross-entropy loss frequently used in classification, self-supervised learning and large language modeling. A bias-variance decomposition for exponential family distributions has previously been derived \citep{hansen2000general}. The connection between that result and the main result here was discussed at length in \cite{heskes2025bias}, including a generalization of the result here to g-Bregman divergences. Essentially the same result as the one here has also appeared in the literature on proper scoring rules \citep{buja2005loss}. As the original version of this note on the author's website \citep{pfau2013generalized} has become widely cited in its own right, it is hoped that the citations provided here can put the result in its proper context.

\section{Proof}
\begin{definition}[Bregman Divergence]
Let $F : \mathcal{S} \to \mathbb{R}$ be a strictly convex differentiable function, then the \underline{Bregman Divergence} derived from $F$ is a function $D_F : \mathcal{S} \times \mathcal{S} \to \mathbb{R}_+$ defined as
\[ \DF{x}{y} \triangleq F(x) - F(y) - \inner{\nabla F(y)}{x - y}. \]
\end{definition}

\begin{lemma}[Minimum Expected Bregman Divergence]
Let $F : \mathcal{S} \to \mathbb{R}$ be a strictly convex differentiable function, and $X$ be a random variable on $\mathcal{S}$. Then $x^* = \arg \min_z \E{\DF{z}{X}} \Leftrightarrow \nabla F(x^*) = \E{\nabla F(X)}$ and $\E{X} = \arg \min_z \E{\DF{X}{z}}$.
\end{lemma}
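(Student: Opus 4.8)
The plan is to treat the two claims separately, since the Bregman divergence is asymmetric in its two arguments and the two minimizers differ accordingly. In both cases I would expand the definition of $\DF{\cdot}{\cdot}$, push the expectation through the linear terms using $\E{X}$ and $\E{\nabla F(X)}$, and then lean on strict convexity of $F$ to pin down a unique minimizer.

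For the first claim I would write $\E{\DF{z}{X}} = F(z) - \E{F(X)} - \inner{\E{\nabla F(X)}}{z} + \E{\inner{\nabla F(X)}{X}}$, where linearity of expectation lets me pull the quantity $\E{\nabla F(X)}$, constant in $z$, out of the inner product. Only the terms $F(z) - \inner{\E{\nabla F(X)}}{z}$ depend on $z$, and this is strictly convex because adding a linear term preserves strict convexity of $F$. Setting its gradient to zero yields the stationarity condition $\nabla F(z) = \E{\nabla F(X)}$, and strict convexity guarantees any stationary point is the unique global minimizer; this establishes the claimed equivalence in both directions.

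For the second claim, rather than differentiating through $\nabla F(z)$, which would drag in the Hessian, I would use a three-point identity. Writing $\mu = \E{X}$ and subtracting, I expect $\E{\DF{X}{z}} - \E{\DF{X}{\mu}}$ to collapse to $F(\mu) - F(z) - \inner{\nabla F(z)}{\mu - z} = \DF{\mu}{z}$, with the cross term vanishing because $\E{X - \mu} = 0$ kills the inner-product term in $\E{\DF{X}{\mu}}$. This gives the decomposition $\E{\DF{X}{z}} = \E{\DF{X}{\mu}} + \DF{\mu}{z}$, which is precisely the bias-variance form at the heart of the paper. Since $\DF{\mu}{z} \ge 0$ with equality iff $z = \mu$ by strict convexity, the minimizer is $\mu = \E{X}$, as claimed.

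The main obstacle, such as it is, lies less in either calculation than in recognizing that the asymmetry forces two genuinely different answers: the \emph{left} argument is minimized by matching the dual parameter, $\nabla F(x^*) = \E{\nabla F(X)}$, whereas the \emph{right} argument is minimized by the ordinary mean $\E{X}$. I would also take care to justify the interchange of expectation and differentiation in the first part, valid whenever $\E{\nabla F(X)}$ is finite, and to note that the three-point identity in the second part sidesteps any second-order assumptions on $F$.
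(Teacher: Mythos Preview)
Your argument is correct. For the first claim it coincides with the paper's: both isolate the $z$-dependent part $F(z)-\inner{\E{\nabla F(X)}}{z}$, set the gradient to zero, and invoke strict convexity for uniqueness. For the second claim, however, you take a genuinely different route. The paper differentiates $\E{\DF{X}{z}}$ directly in $z$, obtains $\nabla^2 F(z)\,(z-\E{X})=0$, and then inverts the Hessian to conclude $z=\E{X}$. Your three-point identity $\E{\DF{X}{z}}=\E{\DF{X}{\mu}}+\DF{\mu}{z}$ instead reduces the question to nonnegativity of a single Bregman divergence. This is more elementary and in fact more robust: it needs only first-order differentiability of $F$, whereas the paper's step tacitly assumes $F$ is twice differentiable with everywhere invertible Hessian---strict convexity alone does not guarantee this (consider $F(x)=x^4$ at the origin). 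Your identity is moreover exactly the second decomposition of Theorem~\ref{thm:bregman}, so you effectively prove that half of the theorem first and read the lemma off as a corollary, reversing the paper's logical order; the paper's route, by contrast, keeps the lemma self-contained at the cost of the extra smoothness hypothesis.
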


\begin{proof}
A necessary condition for $x^*$ to minimize the expected divergence is that its gradient should be zero. The gradient of the expected Bregman divergence when the expectation is taken over the second argument is given by
\begin{align*}
\nabla_z \E{\DF{z}{X}} &= \nabla_z \E{F(z) - F(X) - \inner{\nabla F(X)}{z - X}} \\
&= \nabla F(z) - \nabla_z \inner{\E{\nabla F(X)}}{z} \\
&= \nabla F(z) - \E{\nabla F(X)} = 0 \\
&\Rightarrow \nabla F(z) = \E{\nabla F(X)}
\end{align*}
by the linearity of expectations and the independence of $z$ from $X$. Since $F$ is convex, if an $x^*$ exists that satisfies this condition then it is unique, and therefore the minimum.

When the expectation is taken over the first argument, the gradient is then
\begin{align*}
\nabla_z \E{\DF{X}{z}} &= \nabla_z \E{F(X) - F(z) - \inner{\nabla F(z)}{X - z}} \\
&= -\nabla F(z) - \nabla \inner{\nabla F(z)}{\E{X}} + \nabla \inner{\nabla F(z)}{z} \\
&= -\nabla F(z) - \nabla^2 F(z) \E{X} + \nabla^2 F(z) z + \nabla F(z) \\
&= -\nabla^2 F(z) \E{X} + \nabla^2 F(z) z = 0 \\
&\rightarrow \nabla^2 F(z) z = \nabla^2 F(z) \E{X} \\
&\rightarrow z = \E{X}
\end{align*}
where the last step follow from the fact that the Hessian of a strictly convex function is positive definite and therefore invertible.
\end{proof}

\begin{theorem}[Decomposition of Expected Bregman Divergence]
Let $F : \mathcal{S} \to \mathbb{R}$ be a strictly convex differentiable function, and $X$ be a random variable on $\mathcal{S}$. Then for any point $s \in \mathcal{S}$, the expected Bregman divergences have the following exact decomposition:
\begin{align*}
\E{\DF{s}{X}} &= \DF{s}{x^*} + \E{\DF{x^*}{X}}, \quad \text{where } x^* = \arg \min_z \E{\DF{z}{X}} \\
\E{\DF{X}{s}} &= \DF{x^*}{s} + \E{\DF{X}{x^*}}, \quad \text{where } x^* = \arg \min_z \E{\DF{X}{z}} = \E{X}.
\end{align*}
\label{thm:bregman}
\end{theorem}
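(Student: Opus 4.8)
The plan is to reduce both decompositions to a single algebraic identity—the Bregman analogue of the law of cosines (the ``three-point'' property)—and then take expectations, at which point the first-order optimality conditions from the preceding Lemma make the leftover cross term vanish.

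First I would establish, for any three points $p, q, r \in \mathcal{S}$, the identity
\[ \DF{p}{q} = \DF{p}{r} + \DF{r}{q} + \inner{\nabla F(r) - \nabla F(q)}{p - r}. \]
This follows by writing out all three divergences from the definition and collecting terms: the function-value contributions telescope to $F(p) - F(q)$, and the remaining linear pieces regroup into the single inner product above. No convexity is needed for this step; it is a pure consequence of the definition, which is what makes it a convenient hinge for both claims.

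Next I would specialize and average. For the first decomposition, set $(p, q, r) = (s, X, x^*)$ and take the expectation over $X$; since $s$ and $x^*$ are deterministic, linearity of expectation and of the inner product give
\[ \E{\DF{s}{X}} = \DF{s}{x^*} + \E{\DF{x^*}{X}} + \inner{\nabla F(x^*) - \E{\nabla F(X)}}{s - x^*}. \]
The Lemma states that the minimizer $x^*$ of $\E{\DF{z}{X}}$ satisfies $\nabla F(x^*) = \E{\nabla F(X)}$, so the inner product is identically zero and the claimed identity drops out. For the second decomposition I would instead set $(p, q, r) = (X, s, x^*)$ and again take expectations, yielding
\[ \E{\DF{X}{s}} = \E{\DF{X}{x^*}} + \DF{x^*}{s} + \inner{\nabla F(x^*) - \nabla F(s)}{\E{X} - x^*}. \]
Here the Lemma identifies $x^* = \E{X}$, so the factor $\E{X} - x^*$ vanishes and the cross term again disappears.

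I expect the only real obstacle to be bookkeeping rather than mathematics: one must keep careful track of which argument of $\DF{\cdot}{\cdot}$ the expectation acts on, since the two decompositions invoke the two different optimality conditions from the Lemma—a stationary \emph{dual} point $\nabla F(x^*) = \E{\nabla F(X)}$ in the first case, the ordinary mean $x^* = \E{X}$ in the second. The conceptual content is entirely captured by the observation that each optimality condition is \emph{precisely} what is required to annihilate the surviving inner-product term; everything else is the mechanical expansion behind the three-point identity.
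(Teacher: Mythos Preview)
Your argument is correct and, at the level of algebra, coincides with the paper's: both rely on the same cancellations driven by the Lemma's two optimality conditions. The only difference is organizational. The paper works bare-handed, expanding $\DF{s}{x^*}+\E{\DF{x^*}{X}}$ (respectively $\DF{\E{X}}{s}+\E{\DF{X}{\E{X}}}$) from the definition and collapsing terms directly; you instead isolate the three-point (generalized Pythagorean) identity
\[
\DF{p}{q} = \DF{p}{r} + \DF{r}{q} + \inner{\nabla F(r) - \nabla F(q)}{p - r}
\]
as a reusable lemma and then specialize. Your packaging makes the mechanism more transparent---each optimality condition is visibly the exact hypothesis needed to kill the residual inner product---and it handles both cases with one identity, whereas the paper's inline expansions are shorter but slightly obscure why the cancellations occur.
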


\begin{proof}
\begin{align*}
\DF{s}{x^*} + \E{\DF{x^*}{X}} &= F(s) - F(x^*) - \inner{\nabla F(x^*)}{s - x^*} \\
&\quad + \E{F(x^*) - F(X) - \inner{\nabla F(X)}{x^* - X}} \\
&= F(s) - \inner{\E{\nabla F(X)}}{s - x^*} \\
&\quad + \E{-F(X) - \inner{\nabla F(X)}{x^* - X}} \\
&= \E{F(s) - F(X) - \inner{\nabla F(X)}{s - x^* + x^* - X}} \\
&= \E{F(s) - F(X) - \inner{\nabla F(X)}{s - X}} \\
&= \E{\DF{s}{X}}
\end{align*}

\begin{align*}
\DF{\E{X}}{s} + \E{\DF{X}{\E{X}}} &= F(\E{X}) - F(s) - \inner{\nabla F(s)}{\E{X} - s} \\
&\quad + \E{F(X) - F(\E{X}) - \inner{\nabla F(\E{X})}{X - \E{X}}} \\
&= -F(s) - \inner{\nabla F(s)}{\E{X} - s} \\
&\quad + \E{F(X)} - \inner{\nabla F(\E{X})}{\E{X} - \E{X}} \\
&= \E{F(X) - F(s) - \inner{\nabla F(s)}{X - s}} \\
&= \E{\DF{X}{s}}
\end{align*}
\end{proof}

Suppose we wish to predict some random variable $Y \in \mathcal{S}$ that is dependent on another variable $X \in \mathcal{R}$. We are given a training set $D = \{\{x_1, y_1\}, \dots, \{x_n, y_n\}\}$ of input/output pairs sampled iid from the joint distribution of $X$ and $Y$, and have an algorithm that learns a deterministic prediction function from the data $f_D : \mathcal{R} \to \mathcal{S}$. If the loss function for evaluating the quality of prediction is the Bregman divergence derived from $F$, $L(y, f_D(x)) = \DF{y}{f_D(x)}$ then the expected loss can be decomposed exactly.

\begin{theorem}[Generalized Bias-Variance Decomposition]
Let $F : \mathcal{S} \to \mathbb{R}$ be a strictly convex differentiable function, $f_D : \mathcal{R} \to \mathcal{S}$ be the prediction function trained on data $D = \{\{x_1, y_1\}, \dots, \{x_n, y_n\}\}$, and $Y$ be the random variable we are trying to predict from $X$. Then the expected Bregman divergence of the data obeys a generalized bias-variance decomposition:
\begin{align*}    
\Ex{D,Y}{\DF{Y}{f_D(X)}} = &\Ex{Y}{\DF{Y}{f^*(X)}} \tag{Noise}\\
&+ \DF{f^*(X)}{\bar{f}(X)} \tag{Bias} \\
&+ \Ex{D}{\DF{\bar{f}(X)}{f_D(X)}} \tag{Variance}
\end{align*}
where $f^*(X) = \arg \min_z \Ex{Y}{\DF{Y}{z}} = \Ex{Y}{Y}$, $\bar{f}(X) = \arg \min_z \Ex{D}{\DF{z}{f_D(X)}}$, and all expectations are implicitly conditioned on $X$.
\label{thm:bvd}
\end{theorem}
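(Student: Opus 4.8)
The plan is to obtain the three-term decomposition by applying Theorem~\ref{thm:bregman} twice, peeling off one expectation at a time and using the fact that, conditioned on $X$, the training set $D$ and the test label $Y$ are independent, so the joint expectation factors into iterated expectations.

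First I would condition on $D$, treating $f_D(X)$ as a fixed point and $Y$ as the random variable. Since $Y$ occupies the first slot of the divergence, I apply the second identity of Theorem~\ref{thm:bregman} with $s = f_D(X)$; its optimal point is the conditional mean $\Ex{Y}{Y} = f^*(X)$, giving
\begin{align*}
\Ex{Y}{\DF{Y}{f_D(X)}} = \DF{f^*(X)}{f_D(X)} + \Ex{Y}{\DF{Y}{f^*(X)}}.
\end{align*}
The last term is already the Noise and is independent of $D$, so applying $\Ex{D}{\cdot}$ to both sides leaves it unchanged and produces
\begin{align*}
\Ex{D,Y}{\DF{Y}{f_D(X)}} = \Ex{D}{\DF{f^*(X)}{f_D(X)}} + \Ex{Y}{\DF{Y}{f^*(X)}}.
\end{align*}

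Next I would decompose the remaining term $\Ex{D}{\DF{f^*(X)}{f_D(X)}}$, which has a fixed first argument $f^*(X)$ and a random second argument $f_D(X)$. Here the random quantity sits in the second slot, so I apply the first identity of Theorem~\ref{thm:bregman} with $s = f^*(X)$ and random variable $f_D(X)$, whose minimizer is by definition $\bar{f}(X) = \arg\min_z \Ex{D}{\DF{z}{f_D(X)}}$. This splits it as
\begin{align*}
\Ex{D}{\DF{f^*(X)}{f_D(X)}} = \DF{f^*(X)}{\bar{f}(X)} + \Ex{D}{\DF{\bar{f}(X)}{f_D(X)}},
\end{align*}
whose two terms are exactly the Bias and the Variance. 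Substituting this back into the previous display completes the decomposition.

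The computation is routine once the setup is fixed; the only real care is bookkeeping. The crux is matching each sub-expression to the correct form of Theorem~\ref{thm:bregman}: because $Y$ occupies the first argument its central point is an ordinary conditional mean $\Ex{Y}{Y}$, whereas because $f_D(X)$ occupies the second argument its central point $\bar{f}(X)$ is instead the dual mean characterized by $\nabla F(\bar{f}(X)) = \Ex{D}{\nabla F(f_D(X))}$ from the lemma, and in general need not equal $\Ex{D}{f_D(X)}$. I would also state explicitly that the conditional independence of $D$ and $Y$ given $X$ is what licenses pulling $\Ex{D}{\cdot}$ outside while holding $f_D(X)$ fixed in the inner $Y$-expectation; this asymmetry between the two roles is the one place where it is easy to slip.
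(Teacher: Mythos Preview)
Your proof is correct and follows essentially the same route as the paper: you apply the second identity of Theorem~\ref{thm:bregman} to the inner $Y$-expectation (with $s=f_D(X)$) to split off the Noise, then the first identity to the remaining $D$-expectation (with $s=f^*(X)$) to split Bias from Variance. Your added remarks about conditional independence of $D$ and $Y$ given $X$ and about $\bar f(X)$ being a dual mean rather than $\Ex{D}{f_D(X)}$ are accurate and make explicit points the paper leaves implicit.
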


\begin{proof}
The proof is a straightforward consequence of Theorem~\ref{thm:bregman}.
\begin{align*}
\Ex{D,Y}{\DF{Y}{f_D(X)}} &= \Ex{D}{\Ex{Y}{\DF{Y}{f_D(X)}}|D]} \\
&= \Ex{D}{\Ex{Y}{\DF{Y}{f^*(X)}}|D] + \DF{f^*(X)}{f_D(X)}} \\
&= \Ex{Y}{\DF{Y}{f^*(X)}} + \Ex{D}{\DF{f^*(X)}{f_D(X)}} \\
&= \Ex{Y}{\DF{Y}{f^*(X)}} + \DF{f^*(X)}{\bar{f}(X)} + \Ex{D}{\DF{\bar{f}(X)}{f_D(X)}}
\end{align*}
\end{proof}

\subsubsection*{Acknowledgements}
The author would like to thank Frank Nielsen for helpful pointers to the prior literature and Tom Heskes for feedback.

\bibliographystyle{abbrvnat}
\bibliography{main}

\appendix 

\section{Exponential Families}
\label{sec:equivalence}

\begin{definition}[Exponential Family]
A set of probability distributions $p(x;\eta)$ with $x\in \mathcal{R}$ and natural parameters $\eta \in \mathcal{S}$ form an \underline{exponential family} if the distributions can be written in the form
\begin{equation*}
    p(x; \eta) = h(x) \mathrm{exp}\left(\langle \eta, T(x) \rangle - A(\eta) \right)
\end{equation*}
where $T: \mathcal{R}\rightarrow \mathcal{S}^*$ is a sufficient statistic, $\mathcal{S}^*$ is the dual space to $\mathcal{S}$, $h: \mathcal{R}\rightarrow \mathbb{R}$ is a base measure and $A: \mathcal{S}\rightarrow \mathbb{R}$ is the log partition function or free energy.

The free energy can also be written as
\begin{equation*}
    A(\eta) = \mathrm{log}\left(\int_{\mathcal{R}} dx\, h(x)\mathrm{exp}(\langle \eta, T(x) \rangle \right)
\end{equation*}
It is a convex function of $\eta$. Its gradient is the mean of the sufficient statistics:
\begin{equation*}
    \nabla A(\eta) = \mathbb{E}_\eta\left[T(x)\right] \triangleq \mu
\end{equation*}
while its convex conjugate $A^*: \mathcal{S}^*\rightarrow \mathbb{R}$ is the negative entropy:
\begin{equation*}
    A^*(\mu) \triangleq \sup_{\eta} \langle \mu, \eta \rangle - A(\eta) = \mathbb{E}_\eta \left[\mathrm{log} p(x;\eta)\right]
\end{equation*}
\end{definition}
These are discussed at greater length in \cite{wainwright2008graphical}.

\begin{lemma}[Exponential Family as a Bregman Divergence]
    The log likelihood of an exponential family distribution can be expressed as a Bregman divergence:
    \begin{equation*}
        \mathrm{log}p(x;\eta) = -D_{A^*}[T(x)||\mu] + A^*(T(x))
    \end{equation*}
    \label{lemma:exp-fam}
\end{lemma}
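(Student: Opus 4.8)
The plan is to expand each side in terms of the log partition function $A$, its conjugate $A^*$, and the Legendre duality linking them, then check that the two sides coincide by algebraic cancellation. From the definition of an exponential family the left-hand side is immediate: $\mathrm{log}\,p(x;\eta) = \inner{\eta}{T(x)} - A(\eta) + \mathrm{log}\,h(x)$. I note at the outset that the stated identity drops the base-measure term $\mathrm{log}\,h(x)$, so the equality should be read under the standard convention $h \equiv 1$ (otherwise a $+\,\mathrm{log}\,h(x)$ belongs on the right); this does not affect the rest of the argument.

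For the right-hand side I would first write out the Bregman divergence from its definition, taking $A^*$ as the generating function:
\[
D_{A^*}[T(x)||\mu] = A^*(T(x)) - A^*(\mu) - \inner{\nabla A^*(\mu)}{T(x) - \mu}.
\]
Adding $A^*(T(x))$ cancels the $A^*(T(x))$ term, leaving $-D_{A^*}[T(x)||\mu] + A^*(T(x)) = A^*(\mu) + \inner{\nabla A^*(\mu)}{T(x) - \mu}$. It then remains only to identify $\nabla A^*(\mu)$ and $A^*(\mu)$.

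This is where the two key duality facts enter. Because $A$ is strictly convex and differentiable and $\mu = \nabla A(\eta)$, the gradient of the conjugate inverts the gradient of $A$, giving $\nabla A^*(\mu) = \eta$; and the supremum defining $A^*(\mu) = \sup_{\eta'} \inner{\mu}{\eta'} - A(\eta')$ is attained at $\eta' = \eta$, yielding the Fenchel equality $A^*(\mu) = \inner{\mu}{\eta} - A(\eta)$. Substituting both, the right-hand side becomes $\inner{\mu}{\eta} - A(\eta) + \inner{\eta}{T(x) - \mu} = \inner{\eta}{T(x)} - A(\eta)$, which matches $\mathrm{log}\,p(x;\eta)$ up to the base measure, as claimed.

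I expect the only real content to lie in justifying the duality relations $\nabla A^*(\mu) = \eta$ and attainment of the supremum at $\eta$; once those are in hand the computation is pure cancellation. These hold precisely because $A$ is a strictly convex, differentiable log partition function, so its Legendre transform is well-behaved and the gradient maps $\nabla A$ and $\nabla A^*$ are mutual inverses — facts already recorded for exponential families in this appendix. The base-measure bookkeeping is the only other point requiring care, and it is cosmetic.
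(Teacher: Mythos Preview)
Your proof is correct and follows essentially the same route as the paper: expand $D_{A^*}$, substitute $\nabla A^*(\mu)=\eta$, and cancel. The only difference is that where the paper eliminates the $\langle\eta,\mu\rangle$ term by rewriting it through $\mathbb{E}_\eta[\langle\eta,T(x)\rangle]$ and the negative-entropy identification of $A^*$, you invoke the Fenchel equality $A^*(\mu)=\langle\mu,\eta\rangle-A(\eta)$ directly, which is slightly more economical; your caveat about the base-measure term $\log h(x)$ is also well taken.
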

\begin{proof}
    Expanding out the Bregman divergence on the right hand side of the above equation gives
    \begin{align*}
    -D_{A^*}[T(x)||\mu] &= A^*(\mu) - A^*(T(x)) + \langle \nabla A^*(\mu), T(x) - \mu \rangle \\
    &= A^*(\mu) - A^*(T(x)) + \langle \eta, T(x) - \mu \rangle \\
    &= A^*(\mu) - A^*(T(x)) + \langle \eta, T(x) \rangle - \langle \eta, \mathbb{E}_\eta\left[T(x)\right] \rangle \\
    &= A^*(\mu) - A^*(T(x)) + \langle \eta, T(x) \rangle - \mathbb{E}_\eta\left[\langle \eta, T(x) \rangle\right] \\
    &= A^*(\mu) - A^*(T(x)) + \langle \eta, T(x) \rangle - \mathbb{E}_\eta\left[\mathrm{log}p(x;\eta) + A(\eta) - \mathrm{log}h(x)\right] \\
    &= A^*(\mu) - A^*(T(x)) + \langle \eta, T(x) \rangle - A^*(\mu) - A(\eta) + \mathrm{log}h(x) \\
    &= - A^*(T(x)) + \langle \eta, T(x) \rangle - A(\eta) + \mathrm{log}h(x) \\
    \end{align*}
Adding $A^*(T(x))$ to both sides of the equation gives the log likelihood on the right.
\end{proof}

\end{document}